\let\oldtextbf\textbf
\renewcommand{\textbf}[1]{\oldtextbf{\boldmath #1}}
\newtheorem{thm}{Theorem}
\newtheorem{lem}{Lemma}
\newtheorem{Definition}{Definition}
\def\BibTeX{{\rm B\kern-.05em{\sc i\kern-.025em b}\kern-.08em
    T\kern-.1667em\lower.7ex\hbox{E}\kern-.125emX}}
\newcommand{\linebreakand}{%
  \end{@IEEEauthorhalign}
  \hfill\mbox{}\par
  \mbox{}\hfill\begin{@IEEEauthorhalign}
}
\begin{document}


\title{
  A Majority Invariant Approach to Patch Robustness Certification for Deep Learning Models%
  \thanks{
  This work is supported in part by CityU grant (project no. 9678180). W.K. Chan is the contact author.
  }
}

\author{
\IEEEauthorblockN{Qilin Zhou, Zhengyuan Wei, Haipeng Wang, and W.K. Chan}
\IEEEauthorblockA{Department of Computer Science, 
City University of Hong Kong\\
\{qilin.zhou, 
zywei4-c, 
haipeng.wang\}@my.cityu.edu.hk, wkchan@cityu.edu.hk}
}


\maketitle
\begin{abstract}
Patch robustness certification ensures no patch 
within a given bound on a sample 
can manipulate a deep learning model to predict a different label.
However, existing techniques cannot certify samples that cannot meet their strict bars at the classifier level or the patch region level.
This paper proposes \textit{MajorCert}. \textit{MajorCert} firstly finds all possible label sets manipulatable by the same patch region on the same sample across the underlying classifiers, then enumerates their combinations element-wise, and finally checks whether the majority invariant of all these combinations is intact to certify samples.
\end{abstract}
\begin{IEEEkeywords}
Patch robustness, certification, invariant
\end{IEEEkeywords}
\section{Introduction}
\label{sec:intro}
Patch adversarial attack on images, which modifies pixels in a patch region of a sample, is a critical threat in the real world \cite{brown2017adversarial}. 
Its defense techniques are under intensive study \cite{sharma2022adversarial} and recently received increasing attention from the SE community \cite{huang2023patchcensor, zhou2020deepbillboard}.
Empirical techniques \cite{naseer2019local, hayes2018visible} are based on heuristics, 
whereas certified techniques \cite{ 
mccoyd2020minority,xiang2021patchguard++,huang2023patchcensor,han2021scalecert,li2022vip,chiang2020certified,levine2020randomized,salman2022certified,chen2022towards,lin2021certified,xiang2021patchguard,metzen2021efficient,zhang2020clipped,xiang2022patchcleanser} have guarantee on robustness and are emerging.
The latter has two broad categories of techniques: certified recovery is a harder
problem than certified detection, which proves to recover the label of benign samples against a patch \cite{xiang2021patchguard}.

In this paper, we present a novel technique \textit{MajorCert} for certified recovery.
Existing techniques find a common safety margin between labels at the classifier level \cite{levine2020randomized,salman2022certified,li2022vip,chen2022towards,lin2021certified} or check whether a malicious label exists at the patch region level \cite{xiang2021patchguard,metzen2021efficient,zhang2020clipped}.
A common safety margin is easy to check but overestimated due to the insensitivity of patch location.
Checking at the patch region level gives a strictly tighter certification bound \cite{metzen2021efficient} but still fails if any malicious label exists, limiting its applicability.
This paper proposes \textit{MajorCert}.
\textit{MajorCert} finds the possible label sets manipulatable by the same patch region from the underlying classifiers, enumerates their combinations elementwise, and 
checks the \emph{majority invariant} (see Lemma~\ref{lem:malicious})
to give a certification on a sample.
Satisfying the majority invariant implies the sample is certifiably robust and the label could be recovered even if any patch therein (see Sec. \ref{sec:MIA}).

Main contributions: 
(1) This paper is the first work to show the feasibility of certified recovery to certify a sample under the majority invariant condition even if malicious labels exist at the classifier level or the patch region level.
(2) It presents $\textit{MajorCert}$ with its theory and a case study.

The rest of the paper will revisit the preliminaries (\S\ref{sec:prelim}), present \textit{MajorCert} (\S\ref{sec:majorcert}) and its evaluation (\S\ref{sec:expt}), followed by reviewing related works (\S\ref{sec:relatedWork}) and concluding this work (\S\ref{sec:conclusion}).

\section{Preliminaries}
\label{sec:prelim}
\subsection{Image Classification Model, Attack, and Defense}

An image sample $x$ has $w$ rows in height and $h$ columns in width.
A classifier $f$ accepts $x$ as input to predict its label $c$ (denoted by $f(x)$).
%
We define the indicator function
 $\mathbf{1}[z]$ to return 1 if the condition $z$ holds, otherwise 0.
We adopt the convention used in $\arg\max(\cdot)$ to return the label with a smaller label index for tie-breaking \cite{levine2020randomized}, modeled by the function $\mathbf{1}[c_i > c_j]$ where $c_i > c_j$ if indexes $i > j$.


%
A \textbf{patch region} in an image is represented by a binary matrix $p \in [0,1]^{w \times h}$ where all the elements within the region are set to 1, otherwise 0. 
We follow prior works 
\cite{levine2020randomized,xiang2021patchguard} 
to focus on cases where the patch region is a square of side $m$ 
(as a conservative estimate by a defense).
We regard $m$ as the size parameter of $p$.
We define two samples $x_1$ and $x_2$ \textbf{differ} by $p$ if and only if $(J-p) \cdot x_1 = (J-p) \cdot x_2$ and $x_1 \neq x_2$ where $\cdot$ is the element-wise product operator and $J$ is an all-ones matrix \cite{horn2012matrix} of the same
dimension as $p$.
%
An attacker aims to find a sample $x_2$ such that $f(x_2) \neq f(x_1)$.
%
A defense aims to design a defender $D$ such that $D(x_1) = D(x_2) = f(x_1)$ for every patched version $x_2$ of $x_1$.
%
%
We refer to such $x_1$ as a \textbf{certifiably robust sample} on $D$.
As noted in \cite{xiang2021patchguard},
certified recovery 
needs to guarantee the recovery of the label of $x_1$ from $x_2$, which is harder than merely detecting $x_2$ as adversarial.
\subsection{Row, Column, and Block Ablation Strategies}
An \textbf{ablation region} is represented by a binary matrix $\psi \in [0,1]^{w \times h}$ like patch region.
Applying $\psi$ on a sample $x$ creates an \textbf{ablation} $b$ (i.e., $b = \psi \cdot x$).
Existing work \cite{levine2020randomized} studied three types of ablation regions: row, column, and block.
A row ablation region $r_i$ has width $w$ and height $s_r$, spanning from rows $i$ to $[i + (s_r-1)] \mod h$.
A column ablation region $c_j$ has height $h$ and width $s_c$, spanning from columns $j$ to $[j + (s_c-1)] \mod w$.
A block ablation region $b_{i,j}$ is a square ablation region of side $s_b$, spanning from rows $i$ to $[i + (s_b-1)] \mod h$ and columns $j$ to $[j + (s_b-1)] \mod w$. 
Applying the above three rules for all $0 \leq i < h$ and $0 \leq j < w$ produces three sets of ablation regions $\Psi_{a_r}$,$\Psi_{a_c}$, and $\Psi_{a_b}$  respectively, and applying  $\Psi_{a_r}$,$\Psi_{a_c}$, and $\Psi_{a_b}$ 
on a sample $x$ produces three sets of ablations
denoted by $\mathbb{B}(a_r, x)$, $\mathbb{B}(a_c, x)$, and $\mathbb{B}(a_b, x)$, respectively.
We refer to these three \textbf{strategies} to create ablations as $a_r$, $a_c$, and $a_b$, and use the notation $\mathbb{A}$ to represent a set of ablation strategies.

We let \textbf{$f^a$} be a classifier trained on the set $\cup_{x \in T} \mathbb{B}(a, x)$ using the strategy $a \in \mathbb{A}$ and a training dataset $T$, and let the indicator function $f^a_c(x)$ yield 1 if $f^a(x) = c$, otherwise 0.

\subsection{DeRandomized Smoothing (DRS)}
\label{sec:DRS}

DRS \cite{levine2020randomized} is an ensemble-based technique.
Given a classifier $f^a$,
it accepts a test sample $x$ as input, produces the ablation set $\mathbb{B}(a, x)$ from $x$, predicts a label $c$ for $x$ using the function 
$D^a_\textit{DRS}(x)$, 
and reports whether $x$ is certifiably robust.  
\[
D^a_\textit{DRS}(x)=\mathop{\arg\max}\nolimits_{c} n_{c}(x)
\text{ where }
n_{c}(x)=\sum\nolimits_{b\in \mathbb{B}(a, x)}f^a_c(b)
\]
DRS proves that $x$ is certifiably robust if $D^a_\textit{DRS}(x)=c$ and 
$
n_{c}(x)
\geq 2\Delta +
\mathop{max}_{c'\neq c}[n_{c'}(x)+\textsc{1}[{c>c'}]]
$,
where 
$\Delta = m+s_r(s_c)-1$ for row (column) ablation and 
$\Delta = (m+s_b-1)^2$
for block ablation \cite{levine2020randomized}.
DRS directly computes $\Delta$ based on the size parameter of patch region $m$ and the size parameter of the ablation region $s_r$,$s_c$,$s_b$ that creates its $f^a$ for checking.
If the above condition holds, it reports $x$ is certifiably robust.




\begin{algorithm}[t]
\caption{MajorCert}
\label{alg:MajorCert}
\SetKwInOut{KwIn}{Input}
\SetKwInOut{KwOut}{Output}
\KwIn{$\mathbb{A}$ $\gets$ set of ablation strategies, \\$\mathbb{M}$ $\gets$ map of pretrained models ($\mathbb{A}$ as keys), \\$\mathbb{R}$ $\gets$ map of ablation region sets ($\mathbb{A}$ as keys), \\$x$ $\gets$ test sample, \\
      }
\KwOut{$y$ $\gets$ prediction label, \\
${cert}$ $\gets$ certification tag \\}

\textit{V} $\gets$ \{\}\\
  \ForEach{$a \in \mathbb{A}$}{
    $\mathbb{B}(a, x)$ $\gets$ $\langle b | b = x \cdot \psi \land \psi \in \Psi_{a} \land \Psi_{a}=\mathbb{R}$.get$(a) \rangle$
    \\
\textit{V}[$a$]  $\gets$ $\langle f^a(b) \mid b \in \mathbb{B}(a, x) \land f^a = \mathbb{M}$.get$(a) \rangle $ \\
}

$y$ $\gets$ \textsc{ComputeLabel}(\textit{V})\\
${cert}$ $\gets$ \textsc{MajorityCertification}(\textit{V})\\
\If{${cert}$ = False}{${cert}$ $\gets$ {\small \textsc{MajorityInvariantCertification}(\textit{V})}}
\Return{$y$, ${cert}$}
\end{algorithm}

\subsection{Conservative upper and lower bounds on a label}

Suppose two samples $x_1$ and $x_2$ differ by $p$, and $b$ is an ablation produced by $\psi$ in the ablation set $\mathbb{B}(a, x_1)$.
If $p$ overlaps with $\psi$ (i.e.,  $\psi \cdot p\neq O$ where $O$ is a zero matrix of the same dimension as $p$), 
then the pair of labels returned by $f^a( x_1 \cdot \psi)$ and $f^a( x_2 \cdot \psi)$ may be different, otherwise keeps the same. We define the subset of $\mathbb{B}(a, x_1)$ in which $\psi$ of each $b$ doesn't overlap with $p$ (which also means $b \cdot p = O$) as $X$.
Thus, the number of ablations in $X$ predicted to a label $c$ represents a \textbf{lower bound $\underline{n_{c}}(x_1, a, p)$} on the number of ablations of $x_2$ predicted to $c$ by $f^a$,  defined as:
$\underline{n_{c}}(x_1,a,p)=\sum_{b\in \mathbb{B}(a,x_1)\land  b \cdot p = O} f^a_c(b)$.
Similarly, the corresponding \textbf{upper bound $\overline{n_{c}}(x_1,a, p)$} is counted as the number of ablations in $X$ predicted to $c$ by $f^a$ plus the total number of remaining ablations in $\mathbb{B}(a, x_1)/X$, defined as: $\overline{n_{c}}(x_1,a, p)=\sum_{b\in \mathbb{B}(a,x_1)\land  b \cdot p = O} f^a_c(b)+\sum_{b\in \mathbb{B}(a,x_1) } \textsc{1}[{b \cdot p\neq O}]$.


\section{MajorCert}
\label{sec:majorcert}
\subsection{Overview}
\label{sec:overview}
\textit{MajorCert} consists of label prediction and two certification analyses. In the main algorithm (Alg. \ref{alg:MajorCert}), \textit{MajorCert} first creates an empty map $V$ to contain prediction results and then generates the prediction label for each ablation $b$ under each ablation strategy $a$ (lines 1--5) for the input sample $x$. 
It then predicts the final label (line 6, see Sec. \ref{sec:Prediction_Label}) and checks whether over half of the DRS defenders (one for each ablation strategy) can certify $x$ (line 7, see Sec. \ref{sec:Majority_Certification_Analysis}).
If this is not the case (line 8), it conducts the majority invariant certification analysis (line 9, see Sec. \ref{sec:Majority_Invariant_Analysis}) to generate a certification tag. Line 11 returns the label and the certification tag.
Fig. 1 illustrates how \textit{MajorCert} compares to existing techniques.

\begin{figure}[t]
\centering
\includegraphics[width=8.5cm]{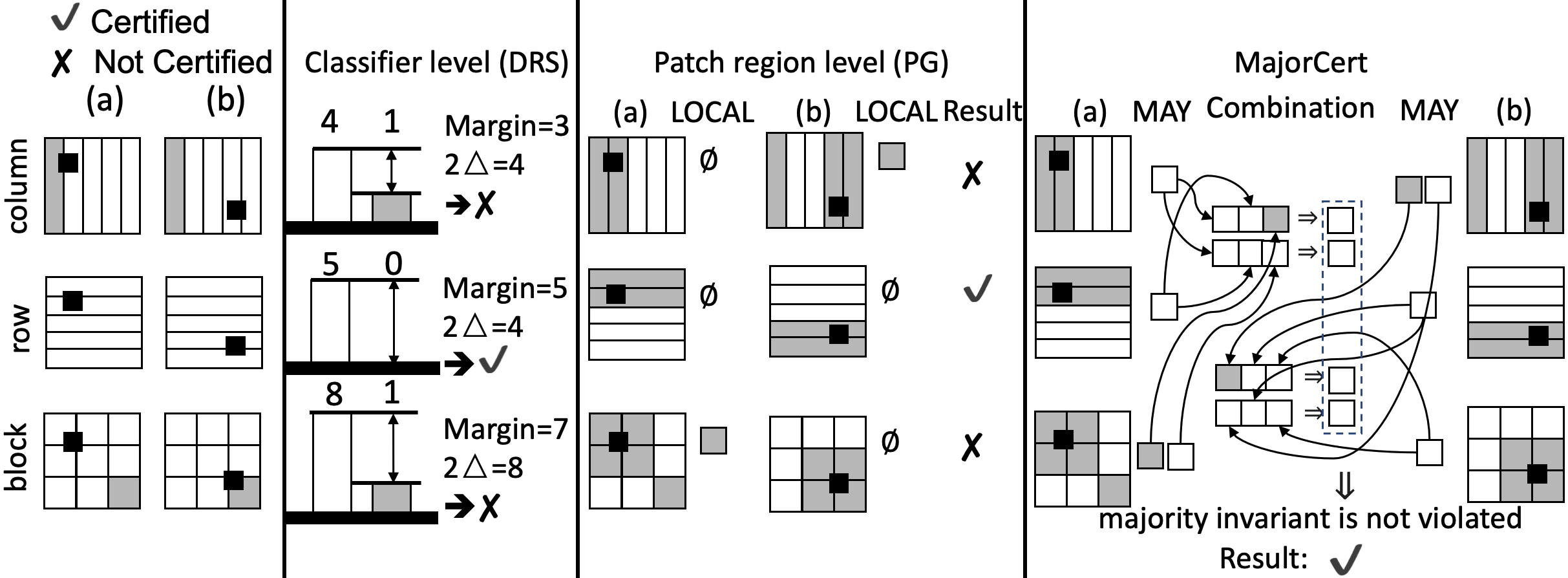}
\caption{MajorCert 
compares to DRS \cite{levine2020randomized} and PG \cite{xiang2021patchguard} on a sample with its ablations' predictions when using the column, row, and block ablation strategies from top to bottom. Neighbour ablations will overlap practically but are omitted here.
The ablations in white and grey stand for prediction to two different class labels.
Scenarios (a) and (b) stand for two situations with a patch region (black square) on the sample. 
See Section III.B for the definitions of \textsc{LOCAL} and \textsc{MAY}.  In the column and block ablation cases, non-empty \textsc{local} sets fail PG, and insufficient margins fail DRS. \textit{MajorCert} certifies the sample even if the \textsc{May} sets contain local-malicious labels and if the majority of DRS (PG) defenders fail to do so.
}
\label{fig:picture001}
\vspace{-3ex}
\end{figure}


\subsection{Certification Analysis}
\subsubsection{Prediction Label}\label{sec:Prediction_Label} \textit{MajorCert} predicts a label via the function $D_{\textit{MC}}(x)$, where $x$ is a test sample, and all classifiers $f^a$ for all $a \in \mathbb{A}$ share the same training dataset $T$.
\begin{equation}
\label{def:D_mc}
D_{\textit{MC}}(x)  = \mathop{\arg\max}\limits_{c}\sum\limits_{a\in \mathbb{A}}%
\textsc{1}[[\mathop{\arg\max}\limits_{c'}\sum\limits_{b\in \mathbb{B}(a,x)} f^a_{c'}(b)]=c]
\end{equation}
The function \textsc{ComputeLabel}($\cdot$) in Alg.~\ref{alg:MajorCert} puts the label having the largest number of instances in $V[a]$ for each $a$ into an initially empty map, followed by finding the label having the largest number of instances in this map, i.e., computing $D_{\textit{MC}}(x)$ as
$\mathop{\arg\max}_{c}\sum_{a\in \mathbb{A}}
\textsc{1}[[\arg\max_{c'} \sum_{v \in V[a]} 1[v = c']]=c]
$, which is equivalent to Eq.~(\ref{def:D_mc}).

\subsubsection{Majority Certification Analysis}\label{sec:Majority_Certification_Analysis}
Like \cite{xiang2021patchguard},
our majority certification analysis is built atop DRS \cite{levine2020randomized}. We regard patch size $m$ as known by all and omit it.
Given a classifier $f^a$, an input $x$, and a label $c$, we let $\theta^a_c(x)$ denote the indicator function that returns 1 if both $D_\textit{DRS}^a(x)$ = $c$ and the defender $D^a_\textit{DRS}$ for the ensemble 
$\mathop{\arg\max}_{c}\sum_{b\in \mathbb{B}(a,x)} f^a_c(b)$
reports 
$x$ as certifiably robust, otherwise returns 0.

Theorem \ref{thm:static_ensemble} states that if more than half of the defenders $D_\textit{DRS}^a$ for all $
a \in \mathbb{A}$ certify the sample for the same label, then our technique also certifies the sample. It is a direct consequence of using an ensemble over a set of defenders.

\begin{thm}[majority certification analysis]\label{thm:static_ensemble}
A sample $x$ is certifiably robust on $D_{\textit{MC}}(\cdot)$ if $\exists c$, s.t. $\sum_{a\in \mathbb{A}}{\theta^a_c(x) > {|\mathbb{A}|}/{2}}$.
\end{thm}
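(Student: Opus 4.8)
The plan is to reduce the ensemble's robustness to the per-strategy guarantee already supplied by DRS, exploiting that a strict majority of strategies agree on one label for every patched input. I would begin by fixing the label $c$ witnessing the hypothesis and setting $S = \{a \in \mathbb{A} : \theta^a_c(x) = 1\}$, so that $|S| > |\mathbb{A}|/2$ by assumption. Unfolding the definition of $\theta^a_c$, each $a \in S$ satisfies two things simultaneously: $D^a_{\textit{DRS}}(x) = c$, and the defender $D^a_{\textit{DRS}}$ reports $x$ as certifiably robust. By the definition of a certifiably robust sample applied to $D^a_{\textit{DRS}}$, this second fact means $D^a_{\textit{DRS}}(x') = D^a_{\textit{DRS}}(x) = c$ for every patched version $x'$ of $x$, i.e. every $x'$ differing from $x$ by a patch region of size $m$.

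Next I would fix an arbitrary patched version $x'$ and count the per-strategy votes feeding into $D_{\textit{MC}}(x')$. Since $D_{\textit{MC}}$ is exactly the $\arg\max$ over the labels $\{D^a_{\textit{DRS}}(x') : a \in \mathbb{A}\}$ (this is Eq.~(\ref{def:D_mc}) once the inner $\arg\max$ is recognized as $D^a_{\textit{DRS}}(x')$), every strategy in $S$ contributes a vote for $c$. Hence $c$ receives at least $|S| > |\mathbb{A}|/2$ votes, while any competing label $c'' \neq c$ receives at most $|\mathbb{A}| - |S| < |\mathbb{A}|/2$ votes. Because $c$ holds a strict majority it is the unique maximizer, so $\arg\max$ selects it regardless of the index-based tie-breaking convention, giving $D_{\textit{MC}}(x') = c$. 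The same counting applied to $x$ itself (where each $a \in S$ already yields $D^a_{\textit{DRS}}(x) = c$) gives $D_{\textit{MC}}(x) = c$, whence $D_{\textit{MC}}(x') = D_{\textit{MC}}(x) = c$ for every patched $x'$, which is precisely the statement that $x$ is certifiably robust for $D_{\textit{MC}}$.

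I expect the only genuine care point to be the interplay between the strict inequality in the hypothesis and the tie-breaking rule of $\arg\max$: it is the strictness of $|S| > |\mathbb{A}|/2$ that forbids any other label from tying or beating $c$, so the convention of returning the smaller index never alters the outcome. Everything else is bookkeeping, since the heavy lifting, namely robustness of each individual DRS defender under all patch placements, is inherited verbatim from the DRS certification condition recalled in Sec.~\ref{sec:DRS}; this is why the result is essentially a direct consequence of ensembling. A minor point I would state explicitly is that the guarantee is invariance of the recovered label: the common value $c$ is simultaneously $D_{\textit{MC}}(x)$ and the label recovered on every patched variant.
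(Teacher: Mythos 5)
Your proof is correct and takes essentially the same route as the paper's: identify the strict majority of strategies whose DRS defenders both output $c$ and certify $x$, use each such defender's guarantee to carry the vote for $c$ over to every patched version, and conclude by strict-majority counting in the outer $\arg\max$ of $D_{\textit{MC}}$. Your explicit handling of the tie-breaking convention is a welcome clarification the paper leaves implicit, but it does not change the argument.
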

\vspace{-0.3cm}
\begin{proof}
Suppose $x_1$ and $x_2$ differ by $p$ and $c_1$ is a label satisfying the condition $\sum_{a\in \mathbb{A}}{\theta^a_{c_1}(x_1) > {|\mathbb{A}|}/{2}}$. For $x_1$, we know if $\theta^a_{c_1}(x_1) = 1$, then $ D^a_{\textit{DRS}}(x_1) = c_1$. So from the given condition, we know over half of the classifiers return $c_1$. Thus, $D_{\textit{MC}}(x_1)$ = $\mathop{\arg\max}_{c} [\sum_{a\in \mathbb{A}} 
 \textsc{1}[{ D^a_{\textit{DRS}}(x_1) = c}]]$ = $c_1$. For $x_2$, we know if $\theta^a_{c_1}(x_1) = 1$, then $f^a(x_2)=f^a(x_1)$ for any patched version $x_2$ of $x_1$. Thus $\sum_{a\in \mathbb{A}}[\textsc{1}[{D^a_{\textit{DRS}}(x_2) = c_1}]]$ $\geq$ $\sum_{a\in \mathbb{A}}{\theta^a_{c_1}(x_1)} > {|\mathbb{A}|}/{2}$, where over half of the classifiers must return $c_1$ for $x_2$. Finally, we have 
 $D_{\textit{MC}}(x_2)$ = $\mathop{\arg\max}_{c} [\sum_{a\in \mathbb{A}} 
 \textsc{1}[{ D^a_{\textit{DRS}}(x_2) = c}]]$ = $c_1$.\end{proof} 

  

\vspace{-1ex}
The function \textsc{MajorityCertification}(.) in Alg.~\ref{alg:MajorCert} 
counts the number of instances of each label in $V[a]$, checks whether the DRS's condition of patch robustness certification (see Sec.~\ref{sec:DRS}) for each ablation strategy $a$ holds. 
It then reports \textit{True} if the condition in Theorem~\ref{thm:static_ensemble} holds, otherwise \textit{False}.

\subsubsection{Majority Invariant Analysis}\label{sec:Majority_Invariant_Analysis}
\label{sec:MIA}
\textit{MajorCert} checks label combinations, 
which is finer in granularity than
checking at the patch region level \cite{xiang2021patchguard} or the classifier level \cite{levine2020randomized}, thereby thus producing a more accurate analysis.

Let the function $g^a(\cdot)$ represent the inner $\arg\max$ term (= $\mathop{\arg\max}_{c'}\sum_{b\in \mathbb{B}(a,x)} f^a_{c'}(b)$) in Eq.~(\ref{def:D_mc}) for each $a \in \mathbb{A}$. Suppose $x_1$ and $x_2$ differ by $p$ and $g^a(x_1)=c$. A {local-malicious label} $c_l$ for $g^a(\cdot)$ and $x_1$ is 
caused by those ablations which are overlapped with $p$. The idea is to check whether the lower bound on the number of ablations predicted to label $c$ is smaller than the upper bound on the number of ablations predicted to another label (e.g., $c_l$).
 If that is the case, $g^a(x_2)$ may return $c_l$ instead of $c$. 
Definition \ref{def:lm-label} captures this property.

\begin{Definition}[local-malicious label]
\label{def:lm-label}
Let $x_1$ be a sample, 
suppose $g^a(x_1)$ returns a label $c$.
A label $c_l \neq c$ is called a \emph{local-malicious label} of $x_1$ for $g^a(\cdot)$
with respect to a given $p$
if and only if
 $\underline{n_{c}}(x_1,a, p) < \overline{n_{c_l}}(x_1,a, p)+\textsc{1}[{c>c_l}]$, where $\textsc{1}[{c>c_l}]$ is for tie-breaking. We let \textsc{local}$(x,a,p)$ denote the set of local-malicious labels of $x$ for $g^a(\cdot)$ and $p$.
\end{Definition}

%
\vspace{-1ex}
Suppose $x_1$ and $x_2$ differ by $p$. Then, $g^a(x_2)$ should return a label in the set \textsc{local}$(x_1,a,p)$
or the original label $g^a(x_1)$.
\vspace{-1ex}
%
Lemma \ref{lem:no_malicious} captures this relation.



\begin{lem}\label{lem:no_malicious}
Let $x_1$ be a sample and $p$ represent a patch region. Suppose $g^a(x_1)$ = $c_1$.
If $x_1$ and $x_2$ differ by $p$, then the condition $g^a(x_2) \in \textsc{local}(x_1,a,p)\cup \{c_1\}$ holds.
\end{lem}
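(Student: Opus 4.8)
The plan is to prove Lemma~\ref{lem:no_malicious} by first establishing a sandwich inequality that bounds the raw vote counts of $x_2$ between the conservative bounds $\underline{n_c}$ and $\overline{n_c}$ computed on $x_1$, and then reading off the local-malicious condition directly from the defining $\arg\max$ property of $g^a(x_2)$.

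First I would fix the strategy $a$ and write $n_c(x_2) = \sum_{b \in \mathbb{B}(a,x_2)} f^a_c(b)$ for the number of ablations of $x_2$ that $f^a$ sends to label $c$. The key structural observation is that the ablation regions are generated from positions $i,j$ alone and therefore range over the same index set for both samples, so ablations of $x_1$ and $x_2$ are naturally paired by their region $\psi$. Since $x_1$ and $x_2$ differ by $p$, they agree on every entry outside $p$; hence for any region $\psi$ that does not overlap $p$ (equivalently the paired ablation satisfies $b \cdot p = O$) we have $x_1 \cdot \psi = x_2 \cdot \psi$ and thus $f^a(x_1 \cdot \psi) = f^a(x_2 \cdot \psi)$. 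Splitting the sum defining $n_c(x_2)$ into non-overlapping and overlapping regions, the non-overlapping part equals exactly $\underline{n_c}(x_1,a,p)$, while the overlapping part is nonnegative and bounded above by the total count of overlapping ablations. This yields, for every label $c$, the two-sided bound $\underline{n_c}(x_1,a,p) \le n_c(x_2) \le \overline{n_c}(x_1,a,p)$, which is the technical heart of the argument.

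Next I would dispatch the trivial case $g^a(x_2) = c_1$, where the conclusion is immediate because $c_1 \in \textsc{local}(x_1,a,p) \cup \{c_1\}$, and concentrate on $g^a(x_2) = c_2$ with $c_2 \neq c_1$. Unfolding the $\arg\max$ with the stated smaller-index tie-breaking, the fact that $c_2$ is selected over $c_1$ is equivalent to $n_{c_2}(x_2) \ge n_{c_1}(x_2) + \textsc{1}[c_2 > c_1]$. Chaining this with the sandwich bounds gives $\overline{n_{c_2}}(x_1,a,p) \ge n_{c_2}(x_2) \ge n_{c_1}(x_2) + \textsc{1}[c_2 > c_1] \ge \underline{n_{c_1}}(x_1,a,p) + \textsc{1}[c_2 > c_1]$, i.e.\ $\underline{n_{c_1}}(x_1,a,p) \le \overline{n_{c_2}}(x_1,a,p) - \textsc{1}[c_2 > c_1]$.

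The main obstacle, and the only place requiring care, is reconciling the two tie-breaking indicators: the chain above carries $\textsc{1}[c_2 > c_1]$, whereas the local-malicious condition of Definition~\ref{def:lm-label} is phrased with the opposite comparison $\textsc{1}[c_1 > c_2]$. I would close this gap using the observation that, since $c_1 \neq c_2$, exactly one of the two comparisons holds, so $\textsc{1}[c_1 > c_2] + \textsc{1}[c_2 > c_1] = 1$; combined with the integrality of the vote counts, this converts the derived $\underline{n_{c_1}}(x_1,a,p) \le \overline{n_{c_2}}(x_1,a,p) - \textsc{1}[c_2 > c_1]$ into the strict inequality $\underline{n_{c_1}}(x_1,a,p) < \overline{n_{c_2}}(x_1,a,p) + \textsc{1}[c_1 > c_2]$. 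This is exactly the defining condition for $c_2$ to be a local-malicious label, so $c_2 \in \textsc{local}(x_1,a,p)$ and the lemma follows.
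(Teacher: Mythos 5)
Your proof is correct and is essentially the paper's argument run in the contrapositive direction: the paper assumes $g^a(x_2)=c_2\notin\textsc{local}(x_1,a,p)\cup\{c_1\}$, unfolds the negated condition of Definition~\ref{def:lm-label} into the same bound-preservation facts you use (votes from non-overlapping ablations are identical for $x_1$ and $x_2$, overlapping ones are bounded by their count), and derives $n_{c_1}(x_2)\geq n_{c_2}(x_2)+\textsc{1}[c_1>c_2]$ to contradict the $\arg\max$. Your isolation of the sandwich inequality $\underline{n_c}(x_1,a,p)\le n_c(x_2)\le\overline{n_c}(x_1,a,p)$ and your explicit handling of the flipped tie-breaking indicator via $\textsc{1}[c_1>c_2]+\textsc{1}[c_2>c_1]=1$ are sound and, if anything, slightly more carefully spelled out than the paper's version.
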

\vspace{-0.32cm}
\begin{proof}
Suppose 
$g^a(x_2) = c_2 \notin \textsc{local}(x_1,a,p)\cup \{c_1\}$.
By Definition~\ref{def:lm-label},
we know the condition 
$\underline{n_{c_1}}(x_1,a,p) \geq \overline{n_{c_2}}(x_1,a, p)+\textsc{1}[c_1>c_2]$ holds, which can be further rewritten into $\sum_{b\in \mathbb{B}(a,x_1)\land  b \cdot p = O} f^a_{c_1}(b) \geq \sum_{b\in \mathbb{B}(a,x_1)\land  b \cdot p = O} f^a_{c_2}(b)+\sum_{b\in \mathbb{B}(a,x_1) } \textsc{1}[{b \cdot p\neq O}]+\textsc{1}[{c_1>c_2}]$. Since $x_1$ and $x_2$ differ by $p$, the condition $(J-p) \cdot x_1 = (J-p) \cdot x_2$ holds, which implies $\sum_{b\in \mathbb{B}(a,x_1) \land  b \cdot p = O} f^a_{c}(b) = \sum_{b\in \mathbb{B}(a,x_2)\land  b \cdot p = O} f^a_{c}(b)$ holds for all $c$. 
%
We then also have $\sum_{b\in \mathbb{B}(a,x_1) } \textsc{1}[{b \cdot p\neq O}] = \sum_{b\in \mathbb{B}(a,x_2) } \textsc{1}[{b \cdot p\neq O}]$, which implies $\sum_{b\in \mathbb{B}(a,x_2)\land  b \cdot p \neq O} f^a_{c}(b) \leq \sum_{b\in \mathbb{B}(a,x_2) } \textsc{1}[{b \cdot p\neq O}] = \sum_{b\in \mathbb{B}(a,x_1) } \textsc{1}[{b \cdot p\neq O}]$ holds for all $c$. So, we have 
$\sum_{b\in \mathbb{B}(a,x_2)\land  b \cdot p = O} f^a_{c_1}(b) \geq \sum_{b\in \mathbb{B}(a,x_2)\land  b \cdot p = O} f^a_{c_2}(b)+\sum_{b\in \mathbb{B}(a,x_2)\land  b \cdot p \neq O} f^a_{c_2}(b)+\textsc{1}[{c_1>c_2}]$, which implies
$n_{c_1}(x_2) \geq n_{c_2}(x_2)+\textsc{1}[{c_1>c_2}]$.
Then, we have $g^a(x_2)\neq c_2$ from the definition of $g^a(\cdot)$,  contradicting to $g^a(x_2)=c_2.$
\end{proof}

\vspace{-0.2cm}
We define the set $\mathbb{E}_p(x)$ (for each patch region represented by $p$ in each index) to contain all combinations of the following sets~$\textsc{May}(x, a, p)$ = \textsc{local}$(x,a,p)\cup \{g^a(x)\}$ for all $a \in \mathbb{A}$ elementwise: 
$
\mathbb{E}_p(x) = \{ \langle
c \in \textsc{May}(x,a,p)
\rangle_{a \in \mathbb{A}}
\}$. %
 (E.g., Suppose $\mathbb{A} = \{a_r, a_c, a_b\}, \textsc{May}(x, a_r, p) = \{c_1, c_2\}, \textsc{May}(x, a_c, p) = \{c_2\}  \text{, and }  \textsc{May}(x, a_b, p) = \{c_2, c_3\}$. Then, 
 $\mathbb{E}_p(x)$ $= \{
 \langle c_1, c_2, c_2 \rangle, $ $ \langle c_1, c_2, c_3  \rangle, $ $ \langle c_2, c_2, c_2  \rangle,$ $  \langle c_2, c_2, c_3  \rangle
 \}$.)
Suppose $x_1$ and $x_2$ differ by $p$ and the label produced by $D_{\textit{MC}}(x_1)$ always receives the largest count (also considering tie-breaking cases) in every combination in $\mathbb{E}_p(x_1)$. 
Recall that $D_{\textit{MC}}(x_2)$ for any $x_2$ from $x_1$ differ by $p$ should eventually produce a label that receives the largest count in $\langle g^a(x_2) \mid\forall a \in \mathbb{A} \rangle$, which must be one of the combinations in $\mathbb{E}_p(x_1)$. So, the label produced by $D_{\textit{MC}}(x_2)$ should be same as the label produced by $D_{\textit{MC}}(x_1)$. Lemma \ref{lem:malicious} captures this insight, which leads to Theorem~\ref{thm:dynamic_ensemble}.

\vspace{-1ex}
\begin{lem}\label{lem:malicious}
Let $x_1$ be a sample and $p$ represent a patch region. If the condition
$
\mathop{\arg\max}_{c}[\sum_{
c' \in e
}{1}[{c'=c}]]= D_{\textit{MC}}(x_1)
$ holds for all $e \in \mathbb{E}_p(x_1)$, called the \textbf{\emph{majority invariant}},
then $D_{\textit{MC}}(x_2) = D_{\textit{MC}}(x_1)$ holds for any pairs of $x_2$ and $x_1$ differing by $p$.
\end{lem}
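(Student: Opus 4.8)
The plan is to fix an arbitrary $x_2$ differing from $x_1$ by $p$ and show directly that its majority label coincides with that of $x_1$. The central observation is that $D_{\textit{MC}}(x_2)$ is completely determined by the tuple of per-strategy labels $\langle g^a(x_2)\rangle_{a\in\mathbb{A}}$, and that this particular tuple is always one of the combinations already enumerated in $\mathbb{E}_p(x_1)$. Once this membership is established, the majority invariant hypothesis applies to it verbatim and forces the desired equality.

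First I would rewrite $D_{\textit{MC}}(x_2)$ using Eq.~(\ref{def:D_mc}): by the definition of $g^a(\cdot)$ we have $D_{\textit{MC}}(x_2)=\arg\max_{c}\sum_{a\in\mathbb{A}}\mathbf{1}[g^a(x_2)=c]$. Writing $e^{*}=\langle g^a(x_2)\rangle_{a\in\mathbb{A}}$, the vote count $\sum_{c'\in e^{*}}\mathbf{1}[c'=c]$ is identical to $\sum_{a\in\mathbb{A}}\mathbf{1}[g^a(x_2)=c]$, so $D_{\textit{MC}}(x_2)=\arg\max_{c}[\sum_{c'\in e^{*}}\mathbf{1}[c'=c]]$. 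This is exactly the quantity appearing in the majority invariant, now instantiated at $e=e^{*}$.

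Next I would show $e^{*}\in\mathbb{E}_p(x_1)$. Applying Lemma~\ref{lem:no_malicious} separately to each strategy $a$ (taking $c_1=g^a(x_1)$) gives $g^a(x_2)\in\textsc{local}(x_1,a,p)\cup\{g^a(x_1)\}=\textsc{May}(x_1,a,p)$. Since each coordinate of $e^{*}$ lies in the corresponding $\textsc{May}(x_1,a,p)$, the tuple $e^{*}$ is, by the element-wise construction of $\mathbb{E}_p(x_1)$, one of its members. Invoking the majority invariant at $e=e^{*}$ then yields $\arg\max_{c}[\sum_{c'\in e^{*}}\mathbf{1}[c'=c]]=D_{\textit{MC}}(x_1)$, and combining with the rewriting of the previous step gives $D_{\textit{MC}}(x_2)=D_{\textit{MC}}(x_1)$, as required.

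The proof is short because Lemma~\ref{lem:no_malicious} already carries the analytical weight of bounding where each $g^a(x_2)$ can land; the remaining work is purely combinatorial bookkeeping. Accordingly, the one point I would be careful about is the consistency of tie-breaking: both the $\arg\max$ defining $D_{\textit{MC}}$ and the $\arg\max$ inside the majority invariant must use the same smaller-index convention, so that the two $\arg\max$ expressions I equate genuinely return the same label and not merely the same vote count. Verifying that these two operators coincide---and that $e^{*}$, though it depends on the unknown $x_2$, is nonetheless guaranteed to be an already-enumerated element of the fixed set $\mathbb{E}_p(x_1)$---is the only real subtlety; everything else follows immediately.
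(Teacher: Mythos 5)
Your proposal is correct and follows essentially the same route as the paper's proof: both use Lemma~\ref{lem:no_malicious} to place each $g^a(x_2)$ in $\textsc{May}(x_1,a,p)$, conclude that the realized tuple of per-strategy labels is one of the combinations in $\mathbb{E}_p(x_1)$, and then apply the majority invariant to that combination. Your version is a slightly cleaner phrasing (naming the specific tuple $e^{*}$ rather than arguing membership in a set of $\arg\max$ values), but the substance is identical.
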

\vspace{-2.2ex}
\begin{proof}

By Lemma \ref{lem:no_malicious}, the label returned by $g^a(x_2)$ for all $x_2$ should be a label in the set $\textsc{May}(x_1, a, p)$.
Recall $\mathbb{E}_p(x)$ contains all possible combinations of the sets $\textsc{May}(x, a, p)$ and $g^a(x)=\mathop{\arg\max}_{c}\sum_{b\in \mathbb{B}(a,x)} f^a_c(b)$.
Then, we have
$
D_{\textit{MC}}(x_2)  = \mathop{\arg\max}\nolimits_{c}[\sum\nolimits_{a\in \mathbb{A}}%
\textsc{1}[[\mathop{\arg\max}\nolimits_{c'}\sum\nolimits_{b\in \mathbb{B}(a,x_2)} f^a_{c'}(b)]=c]]
$
$=\mathop{\arg\max}_{c}[\sum_{a \in A}\textsc{1}[{g_{a}(x_2)=c}]]$, 
which is an element in the set
$\{\mathop{\arg\max}_{c}[\sum_{a \in A}\textsc{1}[{c'=c}]]\mid c'\in \textsc{May}(x_1, a, p)$\}
$=\{\mathop{\arg\max}_{c}$$[\sum_{c' \in e}\textsc{1}[{c'=c}]]\mid e\in \mathbb{E}_p(x_1)\} $ (by the definition of $\mathbb{E}_p(x)$).
Recall the condition
$
\mathop{\arg\max}_{c}[\sum_{
c' \in e
}\textsc{1}[{c'=c}]]= D_{\textit{MC}}(x_1)
$ holds for all $e \in \mathbb{E}_p(x_1)$. So, 
we obtain $D_{\textit{MC}}(x_2)=D_{\textit{MC}}(x_1)$.
   \end{proof}

\vspace{-0.3cm}
\begin{thm}[majority invariant analysis]\label{thm:dynamic_ensemble}
Given a sample $x$,
if Lemma~\ref{lem:malicious} holds for all patch regions, then $x$ is certifiably robust on $D_{\textit{MC}}$.
\end{thm}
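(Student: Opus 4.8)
The plan is to recognize that this theorem is simply the universal lifting of Lemma~\ref{lem:malicious} from a single patch region to the entire family of admissible patches, so the proof reduces to unfolding the definition of a certifiably robust sample and matching quantifiers. Recall from Section~\ref{sec:prelim} that $x$ is certifiably robust for $D_{\textit{MC}}$ exactly when $D_{\textit{MC}}(x) = D_{\textit{MC}}(x_2)$ holds for every patched version $x_2$ of $x$, i.e., for every $x_2$ that differs from $x$ by some patch region $p$ (a square of the fixed side $m$ at an arbitrary location). Lemma~\ref{lem:malicious} already delivers precisely this equality for one fixed $p$ under its majority-invariant premise, so the theorem follows once that premise is quantified over every $p$.

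Concretely, I would proceed as follows. First, fix an arbitrary patched version $x_2$ of $x$; by the definition of differing by a patch region, there exists an admissible $p$ with $(J-p)\cdot x = (J-p)\cdot x_2$ and $x \neq x_2$. Second, invoke the hypothesis that Lemma~\ref{lem:malicious} holds for all patch regions: in particular, for this specific $p$ the majority invariant $\mathop{\arg\max}_{c}[\sum_{c' \in e}\textsc{1}[{c'=c}]] = D_{\textit{MC}}(x)$ is satisfied for every $e \in \mathbb{E}_p(x)$. Third, apply Lemma~\ref{lem:malicious} directly to this $p$ to conclude $D_{\textit{MC}}(x_2) = D_{\textit{MC}}(x)$. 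Since $x_2$, and hence $p$, was arbitrary, the equality holds for every patched version of $x$, which is exactly the definition of $x$ being certifiably robust for $D_{\textit{MC}}$.

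I do not expect a genuine obstacle here, since all the analytic content---bounding $g^a(x_2)$ inside $\textsc{May}(x,a,p)$ and propagating the winning label through the outer $\arg\max$---is already absorbed by Lemmas~\ref{lem:no_malicious} and~\ref{lem:malicious}. The only point requiring care is the quantifier bookkeeping: I must make explicit that ``patched version of $x$'' ranges over exactly the same set of patch regions $p$ for which the hypothesis supplies the majority invariant, so that no admissible patch escapes the per-region guarantee. It is also worth remarking, for completeness, that this theorem is the finer-grained counterpart of Theorem~\ref{thm:static_ensemble}: whereas the majority certification analysis certifies $x$ whenever more than half of the DRS defenders individually certify it, the present result certifies $x$ whenever the element-wise label combinations in $\mathbb{E}_p(x)$ never overturn the majority winner, which is the weaker and hence more broadly applicable condition.
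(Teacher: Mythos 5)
Your proposal is correct and matches the paper's argument: the paper's own proof is the one-liner ``It directly follows Lemma~\ref{lem:malicious},'' and your write-up simply makes explicit the quantifier bookkeeping (fix an arbitrary patched version $x_2$, extract its patch region $p$, apply Lemma~\ref{lem:malicious} for that $p$) that the paper leaves implicit. No gap; this is the same route, just spelled out.
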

\begin{proof}
{ 
It directly follows Lemma \ref{lem:malicious}.
}
\end{proof}
\vspace{-0.1cm}
 \textsc{MajorityInvariantCertification}(.) in Alg.~\ref{alg:MajorCert} enumerates all label combinations in $\mathbb{E}_p(x)$ (generated from the sets $V[a]$ for all $a \in \mathbb{A}$ and the set of patch regions based on a given patch size), checks whether the majority invariant condition 
in Lemma~\ref{thm:dynamic_ensemble} 
holds for each combination in $\mathbb{E}_p(x)$ for all patch regions, and reports \textit{True} if that is the case, otherwise \textit{False}. 
It certifies samples even if Theorem \ref{thm:static_ensemble} cannot apply.
We note that the function is able to report a sample is certifiably robust even if the majority of DRS defenders fail to do so.  

\section{Evaluation}
\label{sec:expt}
\subsection{Setup}
We implement \textit{MajorCert} on Pytorch 1.13.0 
and conduct the case study 
on a Ubuntu 20.04 server with a 48-core 3.0GHz Xeon CPU, 256GB RAM, and 4 2080Ti GPU cards. The implementation of \textit{MajorCert} is available in \cite{my}.

\textbf{Datasets}. We adopt the CIFAR10 dataset\,\cite{krizhevsky2009learning}: 50000 training and 10000 test samples.
We divide the training samples into our training and validation sets using
a random 4:1 split.


\textbf{Ablation strategy}. We adopt the row, column, and block ablation strategies of DRS\cite{levine2020randomized} to produce three DeRandomized Smoothing (DRS) defenders. The hyperparameters follow the optimal ones in \cite{levine2020randomized}: logits abstention threshold = 0.3%
,
column size = 4 and block size = 12. We set row size = column size due to their similarity. 
We apply PatchGuard (PG) \cite{xiang2021patchguard} and \textit{MajorCert} based on these DRS defenders. 
(The code of PG \cite{pg} does not support the block strategy of DRS.)

\textbf{Models, techniques, and  hyperparameters}. We adopt the ResNet18 \cite{he2016deep} implementation in the DRS repository \cite{drs} as the model architecture, the original implementations and hyperparameters of DRS \cite{drs} and PG-DRS \cite{pg}, and the training script of DRS \cite{drs}.
We set the number of epochs and batch size to 450 and 128. 
The learning rate is set to 0.1 and reduced by a factor of 10 at epochs \{150, 250, 350\}. 

\textbf{Evaluation metrics}. 
\textit{Clean accuracy} is the fraction of test samples with correct prediction labels.
 \textit{Certified robust accuracy} is the fraction of test samples whose prediction labels are correct and are certifiably robust
 against any square patch of size $m$. We study $m \times m$ = $2\times2$ and $= 5\times5$, which are consistent with previous works \cite{levine2020randomized,xiang2021patchguard}.
\begin{table}[]
\caption{clean accuracy and certified robust accuracy in CIFAR10}
\setlength\tabcolsep{2.5pt}
\centering
\label{table:main_re}
\begin{tabular}{|c|cc|cc|cc|cc|}
\hline
Patch size & \multicolumn{2}{c|}{$2\times2$}           & \multicolumn{2}{c|}{$5\times5$}           & \multicolumn{2}{c|}{\multirow{2}{*}{Mean Clean}} & \multicolumn{2}{c|}{\multirow{2}{*}{Mean Robust}} \\ \cline{1-5}
Metric     & \multicolumn{1}{c|}{Clean}           & Robust          & \multicolumn{1}{c|}{Clean}           & Robust          & \multicolumn{2}{c|}{}                            & \multicolumn{2}{c|}{}                             \\ \hline
PG-DRS \cite{xiang2021patchguard}& \multicolumn{1}{c|}{84.7\%}          & 69.2\%          & \multicolumn{1}{c|}{84.6\%}          & 57.7\%          & \multicolumn{2}{c|}{84.7\%}                      & \multicolumn{2}{c|}{63.5\%}                       \\ \hline
DRS \cite{xiang2021patchguard}       & \multicolumn{1}{c|}{83.9\%}          & 68.9\%          & \multicolumn{1}{c|}{83.9\%}          & 56.2\%          & \multicolumn{2}{c|}{83.9\%}                      & \multicolumn{2}{c|}{62.6\%}                       \\ \hline
PG-BN \cite{xiang2021patchguard}     & \multicolumn{1}{c|}{84.5\%}          & 63.8\%          & \multicolumn{1}{c|}{83.9\%}          & 47.3\%          & \multicolumn{2}{c|}{84.2\%}                      & \multicolumn{2}{c|}{55.6\%}                       \\ \hline
IBP \cite{han2021scalecert}       & \multicolumn{1}{c|}{65.8\%}          & 51.9\%          & \multicolumn{1}{c|}{47.8\%}          & 30.3\%          & \multicolumn{2}{c|}{56.8\%}                      & \multicolumn{2}{c|}{41.1\%}                       \\ \hline
CBN \cite{han2021scalecert}       & \multicolumn{1}{c|}{83.2\%}          & 51.0\%           & \multicolumn{1}{c|}{83.2\%}          & 16.2\%          & \multicolumn{2}{c|}{83.2\%}                      & \multicolumn{2}{c|}{33.6\%}                       \\ \hline 
\hline
MC\tablefootnote{MC will achieve the same accuracy as this row if \textsc{MajorityCertification} in line 7 of Algorithm \ref{alg:MajorCert} is disabled. }         & \multicolumn{1}{c|}{\textbf{88.1\%}} & \textbf{71.1\%} & \multicolumn{1}{c|}{\textbf{88.1\%}} & \textbf{57.8\%} & \multicolumn{1}{c|}{\textbf{88.1\%}}         & 5.4x        & \multicolumn{1}{c|}{\textbf{64.5\%}}         & 11x         \\ \hline
{PG-DRS\tablefootnote{\label{explain}We select PG-DRS with best certified robust accuracy and its corresponding results on DRS.}}     & \multicolumn{1}{c|}{87.3\%}          & 69.7\%          & \multicolumn{1}{c|}{84.5\%}          & 57.2\%          & \multicolumn{1}{c|}{85.9\%}         & 1x         & \multicolumn{1}{c|}{63.5\%}         & 1x          \\ \hline
DRS\textsuperscript{\ref{explain}}        & \multicolumn{1}{c|}{86.6\%}          & 69.6\%          & \multicolumn{1}{c|}{84.2\%}          & 57.1\%          & \multicolumn{1}{c|}{85.4\%}         & -          & \multicolumn{1}{c|}{63.4\%}         & -           \\ \hline
\end{tabular}

\vspace{-3ex}
\end{table}




\subsection{Results and Data Analysis}
Table \ref{table:main_re} shows the patch robustness certification results achieved by \textit{MajorCert} (MC), DRS \cite{levine2020randomized}, and PG-DRS \cite{xiang2021patchguard} (the last three rows) in our case study, and shows the best results of DRS, PG, Interval Bound Propagation (IBP) \cite{chiang2020certified}, and Clipped BagNet (CBN) \cite{zhang2020clipped} in the literature. 

In Table  \ref{table:main_re}, MC outperforms all peer techniques in clean and certified robust accuracy.
Both MC and PG develop atop DRS. 
MC achieves 3.6\% higher clean accuracy and 0.6\% higher certified robust accuracy against a $5\times5$ patch, and 0.8\% higher clean accuracy and 1.4\% higher certified robust accuracy against a $2\times2$ patch than PG.
Suppose we normalize the accuracy improvement of PG atop DRS in our case study to 1.
The improvements achieved by MC are 5.4x in clean accuracy and  11x in certified robust accuracy.  
From Table  \ref{table:main_re}, advancing the state in patch robustness certification appears to be a hard problem since PG can only slightly improve DRS. 

Fig. \ref{fig:CIFAR10-2t2and5t5} further shows the breakdown by ablation strategy used by PG and DRS in our case study. 
PG's improvement over DRS is inconsistent across ablation strategies. Across the two plots, PG is more effective than DRS using row ablation but not column ablation. DRS is always the most effective when using block ablation in clean accuracy but not certified robust accuracy.
MC builds on top of all the above DRSs and consistently outperforms all peer techniques in the case study.
\begin{figure}[t]

\centering
\hspace{0.5cm}
\includegraphics[width=8cm]{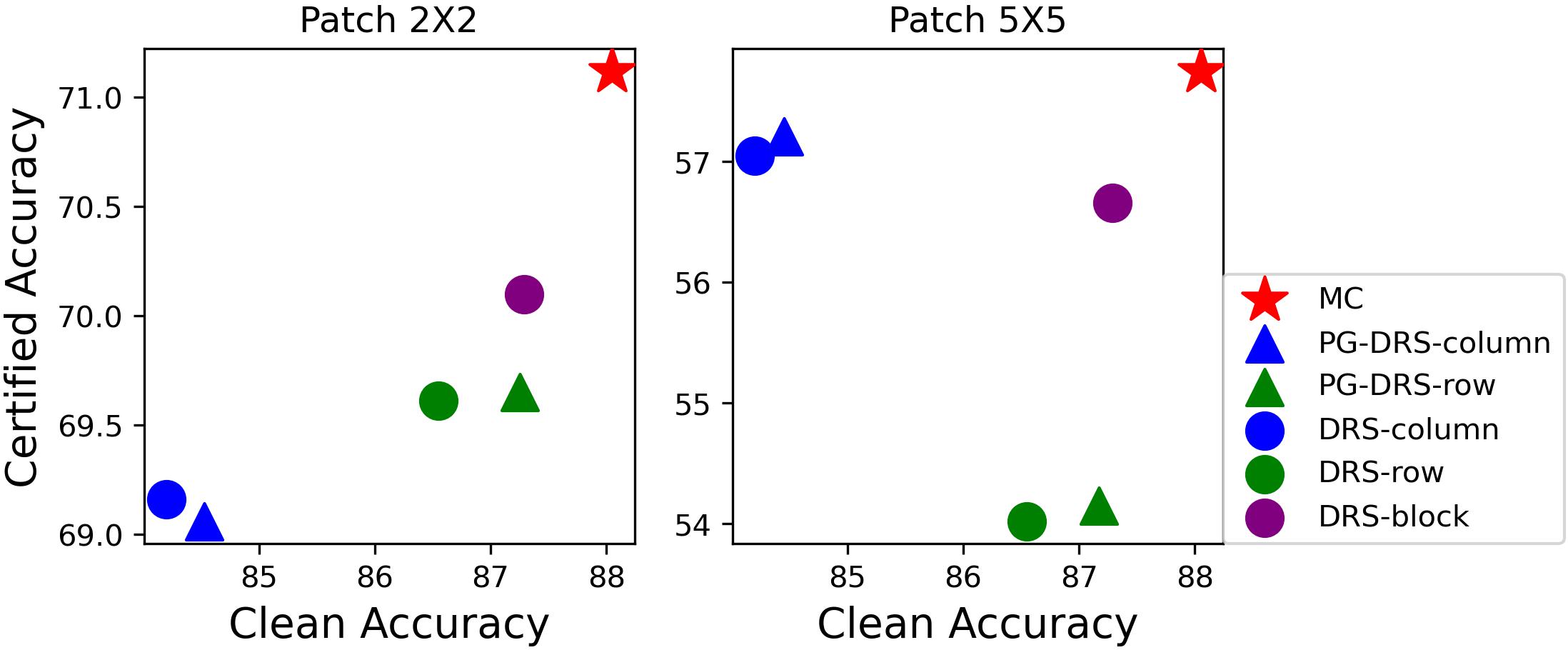}
\caption{CIFAR-10 clean accuracy ($x$-axis) and certified robust accuracy ($y$-axis) achieved by MC, PG, and DRS. 
 \vspace{-1ex}
}

\label{fig:CIFAR10-2t2and5t5}
\end{figure}
\vspace{-1.2ex}
\section{Related Work}
\label{sec:relatedWork}
 
IBP \cite{chiang2020certified} is the first certified defense against patches.
DRS \cite{levine2020randomized} predicts a label for each ablation as the vote. 
It then uses the label receiving the largest votes as output and proves a common safety margin for certification. If the margin of votes between the largest and second largest is large enough, no patch within a given bound can affect the prediction label.
PatchGuard \cite{xiang2021patchguard} is built atop either DRS \cite{levine2020randomized} or BagNet \cite{brendel2019approximating}. 
It constructs a heuristic mask on prediction vectors to mask out high values in those vectors to empirically improve the accuracy.
It then aggregates the vectors after masking to produce the prediction label and scans for each patch region to check whether the malicious label exists similar to MajorCert. But different from MajorCert, if it finds any malicious labels, it cannot certify the scanned sample.
The work of Salman et al. \cite{salman2022certified}  combines the certification analysis of DRS with Vision Transformer architecture (ViT) instead of ResNet architecture to achieve higher accuracy empirically.
As noted in \cite{li2022vip}, further analysis is required to reveal why ViT outperforms ResNet in this problem. 
PatchCleanser 
\cite{xiang2022patchcleanser} connects certified detection and certified recovery with a two-stage certification analysis. VIP \cite{li2022vip} is also a ViT-based proposal for both certified detection and certified recovery, but still following the line of DRS to 
find a common margin for certified recovery.


\vspace{-0.2ex}
\section{Conclusion}
\label{sec:conclusion}
This paper has presented \textit{MajorCert}, a novel patch robustness certification technique.
We have proven its correctness 
and testified its effectiveness through a case study.

\section*{Acknowledgment}
This research is partly supported by the CityU MF\_EXT (project no. 9678180). 
W.K. Chan is the corresponding author.
\bibliographystyle{ieeetr}
\bibliography{myrefs}

\end{document}